\documentclass{article}

\PassOptionsToPackage{authoryear,round}{natbib}
 \usepackage[preprint]{neurips_2025}

\usepackage[utf8]{inputenc} 
\usepackage[T1]{fontenc}    
\usepackage{hyperref}       
\usepackage{url}            
\usepackage{booktabs}       
\usepackage{amsfonts}       
\usepackage{nicefrac}       
\usepackage{microtype}      
\usepackage{xcolor}         
\usepackage{graphicx}
\usepackage{svg}
\usepackage{colortbl}
\usepackage{enumitem}
\usepackage{subcaption}
\usepackage{amsthm}         
\newtheorem{theorem}{Theorem}[section]

\theoremstyle{remark}

\hypersetup{
    colorlinks=true,   
    citecolor=neuripsaccentblue,    
    linkcolor=neuripsaccentblue,    
    urlcolor=neuripsaccentblue      
}

\title{LongSeeker: Elastic Context Orchestration for Long-Horizon Search Agents}

\author{%
  \textbf{Yijun Lu\textsuperscript{1,*}}, \textbf{Rui Ye\textsuperscript{1,*,\#,\textdagger}},  \textbf{Yuwen Du\textsuperscript{1}}, \textbf{Jiajun Wang\textsuperscript{1}}, \textbf{Songhua Liu\textsuperscript{1,\textdagger}}, \textbf{Siheng Chen\textsuperscript{1,\textdagger}} \\
  \textsuperscript{1}Shanghai Jiao Tong University, \textsuperscript{*}Equal Core Contributions, \textsuperscript{\#}Project Lead \\
  \textsuperscript{\textdagger}Corresponding Authors: \{yr991129, liusonghua, sihengc\}@sjtu.edu.cn
}

\begin{document}

\maketitle

\begin{abstract}
Long-horizon search agents must manage a rapidly growing working context as they reason, call tools, and observe information.
Naively accumulating all intermediate content can overwhelm the agent, increasing costs and the risk of errors.
We propose that effective context management should be adaptive: parts of the agent’s trajectory are maintained at different levels of detail depending on their current relevance to the task.
To operationalize this principle, we introduce \textbf{Context-ReAct}, a general agentic paradigm for \emph{elastic context orchestration} that integrates reasoning, context management, and tool use in a unified loop.
Context-ReAct provides five atomic operations: \emph{Skip}, \emph{Compress}, \emph{Rollback}, \emph{Snippet} and \emph{Delete}, which allow the agent to dynamically reshape its working context, preserving important evidence, summarizing resolved information, discarding unhelpful branches, and controlling context size.
We prove that the Compress operator is expressively complete, while the other specialized operators provide efficiency and fidelity guarantees that reduce generation cost and hallucination risk.
Building on this paradigm, we develop \textbf{LongSeeker}, a long-horizon search agent fine-tuned from Qwen3-30B-A3B on 10k synthesized trajectories.
Across four representative search benchmarks, LongSeeker achieves \textbf{61.5\%} on BrowseComp and \textbf{62.5\%} on BrowseComp-ZH, substantially outperforming Tongyi DeepResearch (43.2\% and 46.7\%) and AgentFold (36.2\% and 47.3\%).
These results highlight the potential of adaptive context management, showing that agents can achieve more reliable and efficient long-horizon reasoning by actively shaping their working memory.

  \medskip
  \begin{flushleft}
    \begin{tabular}{@{}ll@{}}
      \includegraphics[width=1em]{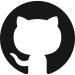} \quad \textbf{Code} & \href{https://github.com/PolarSeeker/LongSeeker}{https://github.com/PolarSeeker/LongSeeker} \\
      \includegraphics[width=1em]{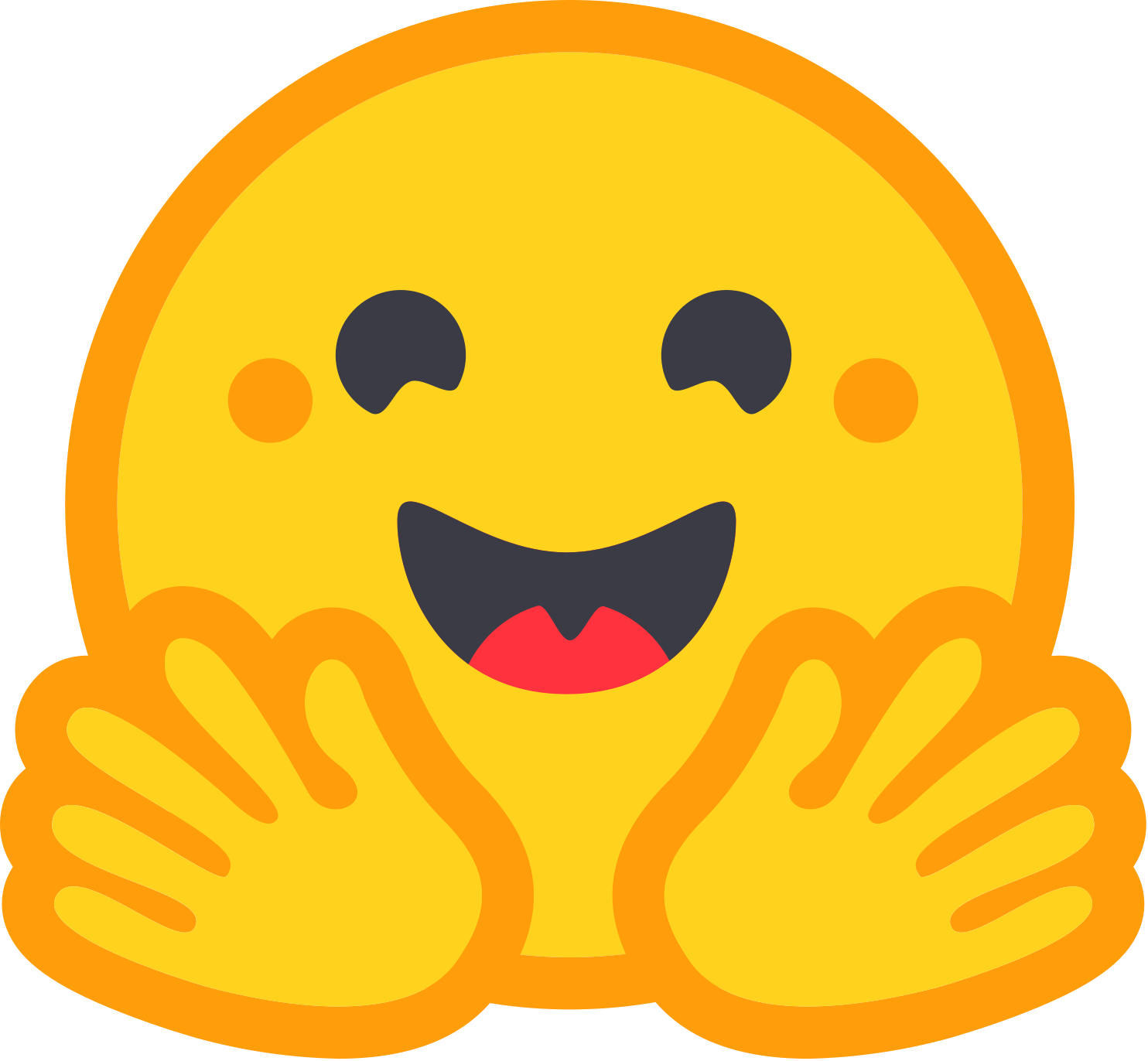} \quad \textbf{Model} & \href{https://huggingface.co/PolarSeeker/LongSeeker-30B-SFT}{https://huggingface.co/PolarSeeker/LongSeeker-30B-SFT} \\
    \end{tabular}
  \end{flushleft}
\end{abstract}

\begin{figure}[!h]
    \centering
    \vspace{-4mm}
    \includegraphics[width=1.0\linewidth]{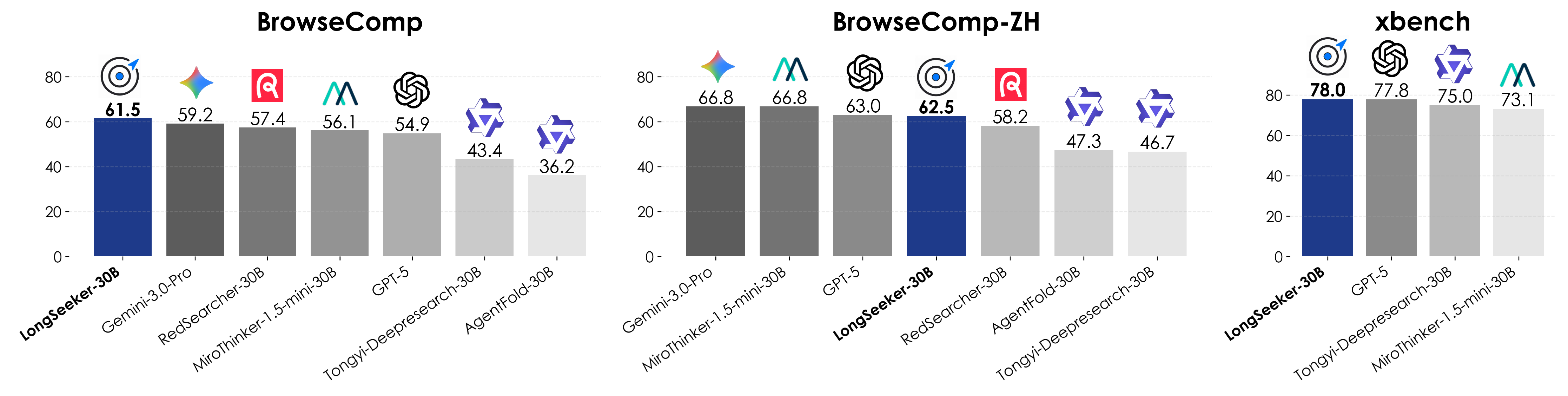}
    \vspace{-3mm}
    \caption{LongSeeker-30B delivers strong results on challenging long-horizon benchmarks, matching or surpassing several foundation models and search agents.}
    \label{fig:teaser}
\end{figure}

\section{Introduction}

The emergence of search agents has transformed how humans retrieve and synthesize information from the web.
Tasks once requiring manual query iteration can now be delegated end-to-end to an AI agent via a single instruction.
Agentic search has thus become a cornerstone capability pursued by AI labs, exemplified by the trajectory from OpenAI's Deep Research systems~\citep{dr} to today's top-tier large language models supporting multi-step, tool-augmented reasoning built upon the ReAct paradigm~\citep{yao2023reactsynergizingreasoningacting}.

However, long-horizon search agents built on the ReAct paradigm face an inherent context bottleneck: as observations, reasoning traces, and tool calls accumulate, the working context becomes increasingly noisy, redundant, and eventually too long to retain in full. 
Existing remedies remain partial: sliding-window truncation is importance-agnostic~\citep{miromindteam2026mirothinker17}; 
threshold-triggered re-starting disrupts reasoning continuity~\citep{deepseekai2025deepseekv32}; 
periodic summarization suffers from fixed-granularity compression and accumulating abstraction errors~\citep{zhou2025mem1learningsynergizememory,yu2025memagentreshapinglongcontextllm, lu2025scalingllmmultiturnrl}; 
and proactive curation is still limited in where or how it can intervene~\citep{ye2025agentfold,yao2026arcactivereflectiondrivencontext}. 
As a result, existing methods cannot provide precise, on-demand control over the evolving shape of the agent's context.

Addressing these, our key insight is that effective context management requires an \textit{elastic} working context, one that can compress, preserve, discard, and restructure different parts of the agent’s context according to its current state.
That is, during long-horizon search, information should exist in different forms as the task evolves: fresh evidence may need to remain intact for verification, resolved evidence can be distilled into conclusions, precision-critical details may survive as snippets, and failed branches should be removed or rolled back.
This \textit{state-dependent fidelity} ensures that the agent maintains the right level of detail for each part of its agentic reasoning trajectory.

Following this insight, we propose \textbf{Context-ReAct}, a general agentic paradigm for \emph{elastic context orchestration} in long-horizon search agents.
At each reasoning turn, the agent jointly produces its reasoning trace, a set of context meta-operations, and the next tool call in a single autoregressive pass. 
The meta-operations are applied before the next observation is appended, allowing the agent to actively determine \emph{when} to update its context, \emph{where} in the trajectory to intervene, and \emph{how} each part of the past should be represented.

Based on this paradigm, we design a meta-operation vocabulary consisting of five atomic actions.
(1) \emph{Skip} leaves the context unchanged when it is already compact and informative.
(2) \emph{Compress} replaces any contiguous range of historical steps with an abstractive summary. 
(3) \emph{Snippet} preserves an exact substring from a step, retaining precision-critical evidence such as numbers, entity names, quotations, or code without abstractive distortion.
(4) \emph{Delete} removes a step which no longer carries residual value.
(5) \emph{Rollback} abandons an unproductive branch by reverting the context to an earlier state while recording the reason for backtracking and any transferable insight.
Together, these operations maintain a multi-resolution working context in which different parts of the context can remain verbatim, be compressed, be partially quoted, be removed, or be structurally rolled back. 
Although simple, this operation set is proved to be \emph{expressively complete}: 
\textsc{Compress} alone can simulate every other operation in principle, while the specialized operators reduce generation cost and hallucination risk through explicit efficiency and fidelity guarantees.

To instantiate and evaluate Context-ReAct, we build \textbf{LongSeeker}, a long-horizon search agent fine-tuned from Qwen3-30B-A3B on 10k synthesized search trajectories.
We evaluate LongSeeker on four representative search benchmarks: BrowseComp~\citep{wei2025browsecompsimplechallengingbenchmark}, BrowseComp-ZH~\citep{zhou2025browsecompzhbenchmarkingwebbrowsing}, xbench~\citep{chen2025xbench}, and GAIA~\citep{mialon2023gaia}.
LongSeeker achieves scores of \textbf{61.5\%} and \textbf{62.5\%} on BrowseComp and BrowseComp-ZH respectively, significantly outperforming competitive baselines such as Tongyi DeepResearch (43.2\% and 46.7\%)~\citep{tongyideepresearchteam2025tongyideepresearchtechnicalreport} and AgentFold (36.2\% and 47.3\%)~\citep{ye2025agentfold}.
These results suggest that elastic context orchestration is a scalable path toward more capable long-horizon agents, shifting context management from a peripheral engineering heuristic to a core component of agentic reasoning.

Our main contributions are:
\begin{itemize}[leftmargin=*, topsep=2pt, itemsep=2pt]
  \item Paradigm. We propose Context-ReAct, a general agentic paradigm for elastic context orchestration that lets agents decide \emph{when}, \emph{where}, and \emph{how} to reshape their working context during ReAct-style search.
  \item Operations. We introduce five meta-operations, \emph{Skip}, \emph{Compress}, \emph{Rollback}, \emph{Snippet}, and \emph{Delete}, forming an expressively complete yet efficient operation set for multi-resolution context control.
  \item Experiments. We train \textbf{LongSeeker} on 10k synthesized trajectories and achieve \textbf{61.5\%} on BrowseComp and \textbf{62.5\%} on BrowseComp-ZH, outperforming strong long-horizon search baselines.
\end{itemize}

\section{Related Work}

\noindent \textbf{Search Agents.}
LLM-based search agents have transformed information retrieval from static query-response matching into dynamic, multi-step reasoning processes. Central to this transformation is the ReAct paradigm~\citep{yao2023reactsynergizingreasoningacting}, which structures agent behavior as an iterative cycle of reasoning, action execution, and observation integration. OpenAI's Deep Research~\citep{openai2025deepresearch} pioneers the fully closed-source path, followed by a series of proprietary agents; meanwhile, open-source efforts such as WebSailor~\citep{li2025websailornavigatingsuperhumanreasoning} and Tongyi DeepResearch~\citep{tongyideepresearchteam2025tongyideepresearchtechnicalreport} push capabilities forward through large-scale trajectory synthesis and post-training optimization. Yet these advances retain a fundamental limitation: they follow the conventional ReAct pattern of unconditionally accumulating observations, causing progressive degradation in context quality and heightened risk of exceeding context windows during extended tasks.

\noindent \textbf{Context Management for Agents.}
Managing growing context has attracted considerable recent attention, with existing methods falling into four categories. \emph{Sliding-window} heuristics such as keep-last-$k$---adopted by the MiroThinker series~\citep{miromindteam2026mirothinker17}---retain only recent steps and discard older content regardless of importance, while \emph{discard-all} variants flush the entire context once thresholds are reached, as in DeepSeek-V3.2~\citep{deepseekai2025deepseekv32} and GLM-4.7~\citep{zhipuai2025glm47}. \emph{Periodic summarization} methods such as MEM1~\citep{zhou2025mem1learningsynergizememory} train agents to maintain compact internal states and achieve strong multi-hop QA performance, while MemAgent~\citep{yu2025memagentreshapinglongcontextllm} processes documents in segments with fixed-size memory buffers. \emph{Proactive curation} approaches such as AgentFold~\citep{ye2025agentfold} and ARC~\citep{yao2026arcactivereflectiondrivencontext} enable agents to actively decide what and when to compress, yet they still lack surgical operations and cannot revisit earlier history to purge outdated content.

\noindent \textbf{Our Approach.}
Context-ReAct advances beyond prior work by defining a formally \emph{complete} set of five atomic meta-operations---\emph{Skip}, \emph{Compress}, \emph{Rollback}, \emph{Snippet} and \emph{Delete}---that are co-generated with standard tool calls at every step. Unlike fixed-rule truncation, each decision is content-aware; unlike periodic summarization, operations are invoked only when necessary; unlike proactive curation, our operation set is formally proven complete (Section~3) and spans the full spectrum from lossless extraction to structural backtracking. All decisions are learned end-to-end from synthesized long-horizon search trajectories.

\section{Method}
\begin{figure}[t]
  \centering
  \includegraphics[width=0.95\linewidth]{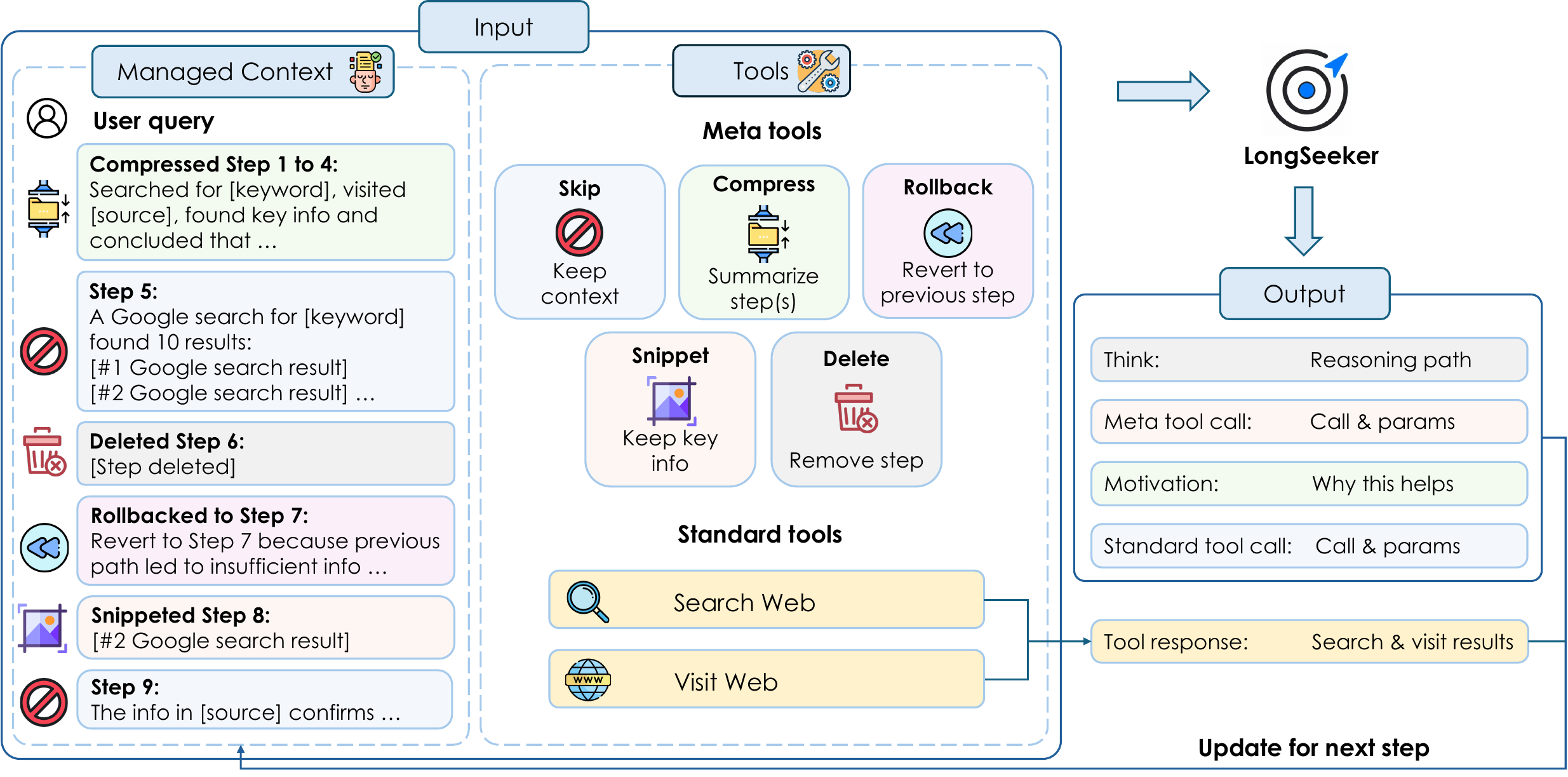}
  \caption{Overview of the \textbf{Context-ReAct} paradigm. Unlike standard ReAct, which passively accumulates history, and unlike prior proactive curation methods~\citep{ye2025agentfold, yao2026arcactivereflectiondrivencontext} that operate at a coarse granularity, Context-ReAct introduces a \emph{complete} and \emph{fine-grained} meta-action layer. At each step, the agent co-generates meta-operations (\textsc{Skip}, \textsc{Compress}, \textsc{Rollback}, \textsc{Snippet}, \textsc{Delete}) alongside standard tool calls, enabling elastic context orchestration that spans the full spectrum from lossless extraction to structural backtracking.}
  \label{fig:meta_react_overview}
\end{figure}

We present \textbf{Context-ReAct}, a general paradigm that augments the standard ReAct loop with an explicit \emph{meta-action} layer for on-demand context management. As illustrated in Figure~\ref{fig:meta_react_overview}, our approach enables the agent to actively curate its working memory by co-generating meta-operations alongside standard reasoning and tool calls. Section~\ref{sec:formulation} establishes the formal definition; Section~\ref{sec:ops} defines the five atomic meta-operations; Section~\ref{sec:completeness} proves expressive completeness and Section~\ref{sec:training} details the data synthesis and training pipeline.

\subsection{Agentic Paradigm}
\label{sec:formulation}

In this section, we introduction our proposed Context-ReAct with a formal definition.

\textbf{Standard ReAct.}
In the standard ReAct paradigm~\citep{yao2023reactsynergizingreasoningacting}, each step $S_i^{\mathrm{std}}$ is defined as
\begin{equation}
  S_i^{\mathrm{std}} = (r_i,\; c_i,\; o_i),
\end{equation}
where $r_i$ is the chain-of-thought reasoning trace, $c_i$ is the tool call, and $o_i$ is the environment observation returned by the tool. The context history at time $t$ is the concatenation $H_t = [S_1^{\mathrm{std}}, \ldots, S_t^{\mathrm{std}}]$, which grows monotonically and without bound under this \emph{append-only} design. As irrelevant observations accumulate, the signal-to-noise ratio of $H_t$ degrades, and $|H_t|$ eventually risks exceeding the model's context limit.

\textbf{Context-ReAct.}
To preserve the generality of the standard ReAct paradigm while equipping the agent with dynamic context management, we augment each step by inserting several \emph{meta operations} between the reasoning trace and the standard tool call.
The resulting step structure is
\begin{equation}
  S_i^{\mathrm{meta}} = (r_i,\; M_i,\; c_i,\; o_i),
\end{equation}
where $M_i = [\mathit{op}_1^{(i)}, \mathit{op}_2^{(i)}, \ldots, \mathit{op}_k^{(i)}]$ is a list of meta-operations generated by the agent to transform the context \emph{before} the next step begins.
Formally, the effective context $H_t'$ used at step $t{+}1$ is
\begin{equation}
  H_t' \;=\; T(H_t,\; M_t),
\end{equation}
where $T$ is the composition of the individual operations in $M_t$, each drawn from the primitive set (defined in Section~\ref{sec:ops}). This mechanism enables the agent to maintain a compact and relevant working memory ($|H_t'| \ll |H_t|$ in typical long-horizon tasks) without any external trigger or architectural modification.

The design principle is that meta-operations are \emph{co-generated} with the reasoning trace and tool call in a single, end-to-end generation step, rather than being triggered by an external heuristic such as a length threshold.
The agent therefore learns \emph{when}, \emph{where}, and \emph{how} to intervene in its own context as an integral part of its policy; see Figure~\ref{fig:meta_react_overview} (right) for illustration.

\subsection{Atomic Meta-Operations}
\label{sec:ops}

To enable flexible and task-aware management of the working context, Context-ReAct equips the agent with a set of atomic meta-operations.
These operations define the full set of primitive actions that can be applied to the context at each reasoning step. Throughout, let $H = [S_1, \ldots, S_n]$ denote the current history.

\textbf{(1) \textsc{Skip}}
is the identity operator. That is, the agent takes no action on the context:
\begin{equation}
  \textsc{Skip}(H) \;=\; H,
\end{equation}
which is issued when the current context is already compact, incurring zero additional generation overhead.

\textbf{(2) \textsc{Compress}}
performs abstractive summarization over any contiguous range of steps $[a,b]$ ($a \leq b$), replacing the context with a summarized step $S_{a:b}$, where $\Sigma$ is the summarized string:
\begin{equation}
  \textsc{Compress}(H,\,a,\,b,\,\Sigma) \;=\;
  [S_1,\;\ldots,\;S_{a-1},\;S_{a:b}=\Sigma,\;
   S_{b+1},\;\ldots,\,S_n].
\end{equation}
Crucially, $[a,b]$ need not be a recent window~\citep{ye2025agentfold}: the agent can \emph{retroactively} recognize that steps from early in the trajectory have become compressible. 
For example, earlier searches may no longer need to be preserved in full once their useful information has been captured by later evidence, allowing the agent to compress them on the fly.
This look-back flexibility is unavailable in sliding-window approaches, which can only truncate from one end of the history.

\textbf{(3) \textsc{Rollback}}
reverts the context to step $k$ by discarding all subsequent steps $S_{k}, \ldots, S_n$ and appending a summarized step that records the reason for backtracking and any transferable insight:
\begin{equation}
  \textsc{Rollback}(H,\,k,\,\Sigma) \;=\;
  [S_1,\;\ldots,\;S_k=\Sigma].
\end{equation}
\textsc{Rollback} models the structural intuition of branch abandonment in tree-based search (DFS/MCTS)~\citep{shi2025montecarloplanninglarge}: when the agent recognizes that a reasoning path has reached a dead end, it discards the failed sub-trajectory while preserving its causal explanation, preventing the same mistake from being repeated.

\textbf{(4) \textsc{Snippet}}
replaces the observation $o_k$ of the $k$-th step with the verbatim substring delimited by the anchor strings \textit{pre} and \textit{suf}:
\begin{equation}
  \textsc{Snippet}(H,\,k,\,\textit{pre},\,\textit{suf}) \;=\;
  \bigl[S_1,\;\ldots,\;(r_k,\,c_k,\,o_k[\textit{pre}{:}\textit{suf}]),
  \;\ldots,\,S_n\bigr].
\end{equation}
Unlike generative summarization, \textsc{Snippet} is \emph{lossless} with respect to the retained segment: it performs pointer-based substring extraction rather than token regeneration, saving token cost and preventing hallucination of precise numerical values, entity names, URLs, or code that must be carried forward exactly.

\textbf{(5) \textsc{Delete}}
removes the $k$-th step, discarding its reasoning trace, tool call, and observation:
\begin{equation}
  \textsc{Delete}(H,\,k) \;=\;
  \bigl[S_1,\;\ldots,\;S_{k-1},\;S_{k+1},\;\ldots,\,S_n\bigr].
\end{equation}
This operation is appropriate when an entire step is uninformative and leaves no useful trace---e.g., a failed or redundant query whose result, reasoning, and call all warrant complete removal to reduce noise.

\textbf{Composite application.}
Multiple meta-operations may be composed within a single step by listing them sequentially in $M_i$. This compositionality allows \textbf{LongSeeker} to, for instance, \textsc{Delete} a noisy interaction step and simultaneously \textsc{Compress} a longer historical segment in one step.

Figure~\ref{fig:step_example} illustrates the combined effect of these meta-operations on a live trajectory (see Appendix~\ref{app:case_study} for a complete example). 
Rather than receiving the full raw trajectory, the model sees a compact, curated view of its history. 
As shown in the left half of the figure, Steps~1--4 have been consolidated via \textsc{Compress} into a single summary sentence that preserves the essential findings while discarding verbose intermediate observations. Step~5 remains unchanged via \textsc{Skip}. Step~6 has been \textsc{Delete}d to eliminate noise from a redundant query. Most significantly, the trajectory has been \textsc{Rollback}ed to Step~7, discarding an unproductive sub-chain of exploration. The resulting managed context is minimal yet information-dense, ensuring the model can focus on the remaining open sub-questions.

\begin{figure}[t]
  \centering
  \includegraphics[width=\linewidth]{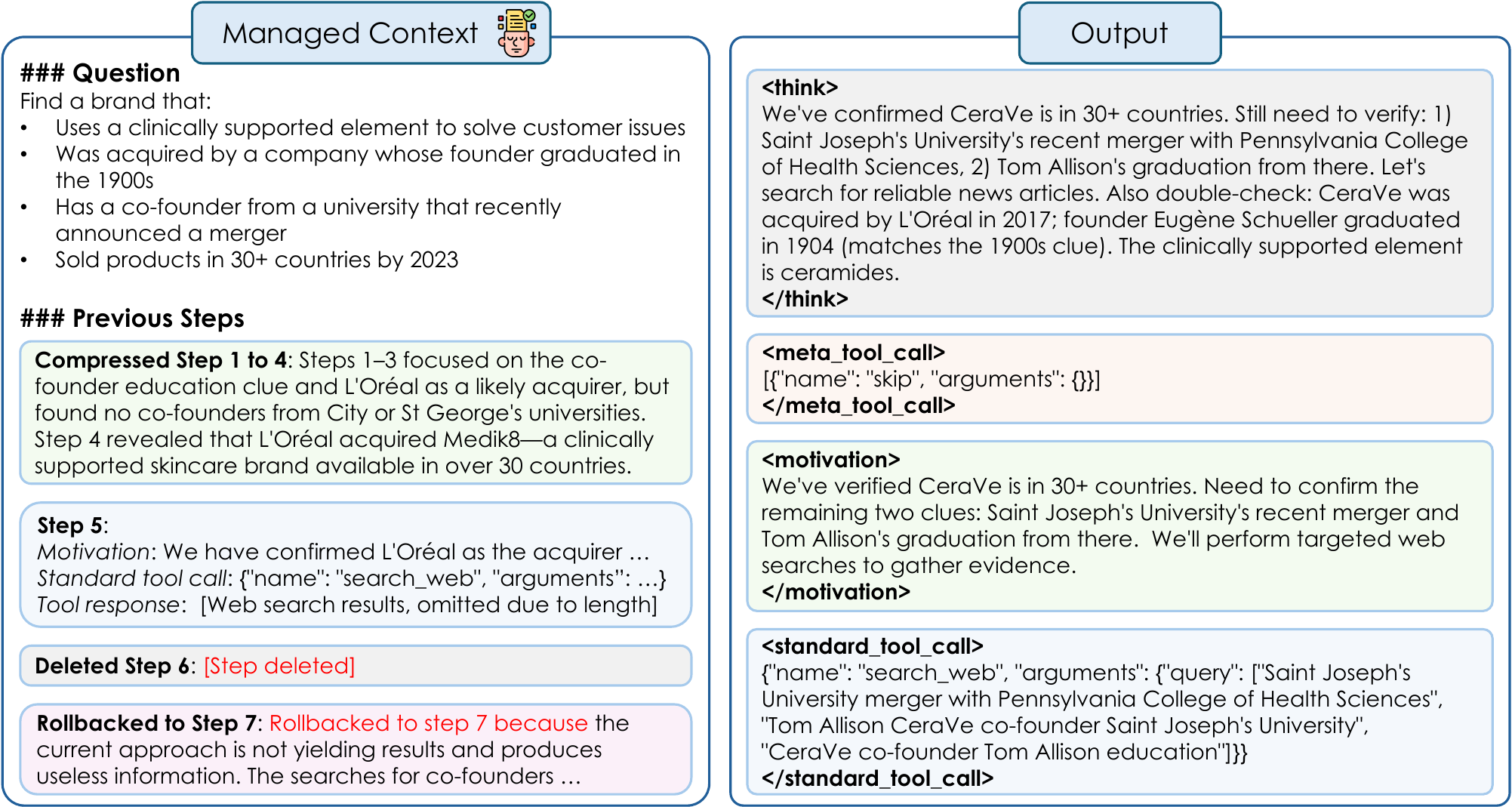}
  \caption{%
    \textbf{Managed context and structured output at a single Context-ReAct step.}
    \emph{Left:} The curated context after applying meta-operations to the raw trajectory. Steps~1--4 are consolidated via \textsc{Compress} into a summary preserving essential findings. Step~5 remains unchanged via \textsc{Skip}. Step~6 is \textsc{Delete}d to eliminate noise from a redundant query. The trajectory is \textsc{Rollback}ed to Step~7, discarding unproductive exploration. The resulting context is minimal yet information-dense.
    \emph{Right:} The four-field structured output containing reasoning, meta-operations, motivation, and the standard tool call.
  }
  \label{fig:step_example}
\end{figure}

\subsection{Expressive Completeness and Principled Redundancy}
\label{sec:completeness}

We now analyze their theoretical properties with a simple d ceduction to justify both the completeness and practical utility of the Context-ReAct action set.

\begin{theorem}[Expressive Completeness]
\label{thm:completeness}
The meta-action set $\mathcal{O} = \{\textsc{Skip}, \textsc{Compress}, \textsc{Rollback}, \textsc{Snippet}, \textsc{Delete}\}$ is expressively complete: for any $H_{\mathrm{in}},\,H_{\mathrm{target}} \in \mathcal{H}$, there exists a finite sequence of operations from $\mathcal{O}$ that transforms $H_{\mathrm{in}}$ into $H_{\mathrm{target}}$.
\end{theorem}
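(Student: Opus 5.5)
The plan is a direct construction that uses \textsc{Compress} over the entire history, with the formal model made precise first. Take $\mathcal{H}$ to be the set of finite nonempty sequences of steps, where each step is an arbitrary string: either a raw triple $(r_i,c_i,o_i)$ serialized as text, or a summary string $\Sigma$. A summarized step $S_{a:b}=\Sigma$ is then just a step whose content is the string $\Sigma$. The key modeling assumption, which I will state explicitly, is that the summary argument $\Sigma$ of \textsc{Compress} may be \emph{any} string. Without it, completeness fails trivially, because \textsc{Skip}, \textsc{Snippet} and \textsc{Delete} never create new content.

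\textbf{Step 1 (collapse).} Let $H_{\mathrm{in}}=[S_1,\dots,S_n]$ and $H_{\mathrm{target}}=[T_1,\dots,T_m]$.
\begin{itemize}
\item If $m\ge 1$, apply $\textsc{Compress}(H_{\mathrm{in}},1,n,\Sigma)$ with $\Sigma=T_1$. This yields $[T_1]$.
\item Alternatively, apply $\textsc{Rollback}(H_{\mathrm{in}},1,T_1)$, which gives the same result.
\item If the target is empty and empty histories are admitted in $\mathcal{H}$, follow with $\textsc{Delete}(\cdot,1)$.
\end{itemize}

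\textbf{Step 2 (growth).} Now extend the one-step history to length $m$. This is the step I expect to be the main obstacle. The operators as defined never increase length: \textsc{Compress} maps $b-a+1\ge1$ steps to one step, and \textsc{Rollback}, \textsc{Snippet} and \textsc{Delete} are non-increasing. So an operator-only argument cannot produce a target longer than the input. Two options resolve this:
\begin{itemize}
\item \textbf{Encode the whole target in one step.} Since summaries are arbitrary strings and the context is ultimately a serialized token sequence, choose $\Sigma$ to be the full serialization of $T_1,\dots,T_m$. The resulting history is then identical to $H_{\mathrm{target}}$ as a context string.
\item \textbf{Interleave with the ReAct loop.} Each Context-ReAct turn appends a new step $(r,M,c,o)$, so new steps enter the history anyway, and \textsc{Compress} on the single newest step $[k,k]$ can overwrite it with $T_k$.
\end{itemize}
I would adopt the first reading as the main proof, noting the identification of $\mathcal{H}$ with serialized contexts, and mention the second as the step-level variant.

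\textbf{Step 3 (finiteness and simulation).} The sequence has at most two operations, so it is finite, which proves Theorem~\ref{thm:completeness}. The same argument shows that \textsc{Compress} alone suffices. I would then record how each other operator is simulated by a single \textsc{Compress}:
\begin{itemize}
\item \textsc{Skip} is $\textsc{Compress}(H,k,k,S_k)$.
\item \textsc{Snippet} is \textsc{Compress} on $[k,k]$ with $\Sigma=(r_k,c_k,o_k[\textit{pre}{:}\textit{suf}])$.
\item \textsc{Rollback} is \textsc{Compress} on $[k,n]$.
\item \textsc{Delete} is \textsc{Compress} on a window $[k-1,k]$ or $[k,k+1]$ with $\Sigma=S_{k\mp1}$, which requires $n\ge2$.
\end{itemize}
These simulations set up the subsequent efficiency and fidelity discussion: the specialized operators achieve the same effect with $O(1)$ generated tokens instead of regenerating $\Sigma$.
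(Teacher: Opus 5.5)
Your main argument is the paper's proof: a single \textsc{Compress} over the whole history with $\Sigma$ set to the serialized target. Your observation that no operator increases the number of steps correctly explains why this works only after identifying histories with strings in $\mathcal{V}^*$, which the paper assumes implicitly. Your per-operator simulations serve the same purpose as the paper's, and your \textsc{Snippet} (keeping $r_k,c_k$) and \textsc{Delete} (merging with a neighbouring step) versions are actually more faithful to the stated definitions than the paper's $\textsc{Compress}(H,k,k,o_k[\textit{pre}{:}\textit{suf}])$ and $\textsc{Compress}(H,k,k,\emptyset)$.
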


\begin{proof}
It suffices to show that \textsc{Compress} alone is a universal string rewriting operator over $\mathcal{H}$. By definition, $\textsc{Compress}(H, 1, |H|, \Sigma)$ replaces the \emph{entire} history $H$ with an arbitrary string $\Sigma \in \mathcal{V}^*$. Setting $\Sigma = H_{\mathrm{target}}$ yields $H' = H_{\mathrm{target}}$ in a single operation. Since a single element of $\mathcal{O}$ can reach any target from any source, the full set $\mathcal{O}$ is trivially complete.
\end{proof}

Although \textsc{Compress} alone is sufficient for theoretical completeness, the remaining four operators provide practical structure by guiding the agent to manage context more efficiently and reliably. 
Each operator addresses a distinct operational need:

\begin{itemize}[leftmargin=1.5em, itemsep=4pt]

\item \textbf{\textsc{Skip}: identity.}
$\textsc{Skip}(H) \equiv \textsc{Compress}(H,1,|H|,H)$ indicates preservation of the entire history.

\item \textbf{\textsc{Rollback}: structural search prior.}
$\textsc{Rollback}(H,k,\Sigma) \equiv \textsc{Compress}(H,k,|H|,\Sigma)$, but framing it as ``rollback to step $k$'' gives the model a clearer inductive bias: it should discard incorrect reasoning branches, similar to backtracking in tree search. This helps the agent learn \emph{when} to abandon a failed path, rather than treating it as a generic compression.

\item \textbf{\textsc{Snippet}: extraction.}
$\textsc{Snippet}(H,k,\textit{pre},\textit{suf}) \equiv \textsc{Compress}(H,k,\,k,\,o_k[\textit{pre}{:}\textit{suf}])$, but generative compression is stochastic and lossy. \textsc{Snippet} guarantees \emph{exact} retention of critical content via pointer-based extraction, which is critical when the retained segment contains numerical values, entity names, or code that must be reproduced verbatim in later reasoning steps.

\item \textbf{\textsc{Delete}: complete step removal.}
$\textsc{Delete}(H,k) \equiv \textsc{Compress}(H,k,k,\emptyset)$, but framing it as ``delete step $k$'' makes the operation explicit: the entire step $k$ is removed from the context.

\end{itemize}

In summary, the five atomic meta-operations can be interpreted as specialized transformations over distinct subspaces of the space of all possible context.
By partitioning the context transformation space in this way, the agent can maintain a compact, relevant, and reliable working context throughout long-horizon reasoning.
This structured decomposition also aligns with the Minimum Description Length principle~\citep{Rissanen1978,grunwald2004tutorial}, providing a rationale for why these specialized operators are beneficial in practice.

\subsection{Data Synthesis and Training}
\label{sec:training}

\textbf{Trajectory synthesis.}
Training \textbf{LongSeeker} via \textbf{Context-ReAct} requires trajectories that contain not only correct final answers but also high-quality \emph{context management decisions} at intermediate steps---a supervision signal absent from all existing datasets. We construct a corpus of $10{,}000$ annotated trajectories through a two-stage pipeline.

\textit{Stage 1: Seed question collection.}
We sample $10{,}000$ complex, multi-hop questions from OpenSeeker~\citep{du2026openseeker,du2026openseekerv2}, comprising $9{,}000$ English and $1{,}000$ Chinese questions, filtering for questions that require substantive multi-step reasoning to answer.

\textit{Stage 2: Context-ReAct trajectory rollout.}
Each question is solved by DeepSeek\,V3.2~\citep{deepseekai2025deepseekv32} acting as the teacher model and operating under the full Context-ReAct paradigm. 
At every step, the teacher directly generates the complete four-field structured output---\texttt{<think>}, \texttt{<meta\_tool\_call>}, \texttt{<motivation>}, and \texttt{<standard\_tool\_call>}---in a single pass, producing context management decisions and the next tool call jointly. Trajectories with correct format constitute the final training set.

\textbf{Supervised fine-tuning.}
We fine-tune Qwen\,3\,30B-A3B~\citep{yang2025qwen3technicalreport} on the annotated corpus via standard next-token prediction:
\begin{equation}
  \mathcal{L}_{\mathrm{SFT}} \;=\;
  -\sum_{t=1}^{T}\sum_{j}
  \log p_\theta\!\left(
    x_j^{(t)} \;\middle|\; x_{<j}^{(t)},\; H_{t-1}'
  \right),
\end{equation}
where $x^{(t)}$ is the full structured output at step $t$---including the chain-of-thought, meta-tool call, motivation, and standard tool call---and $H_{t-1}'$ is the context after applying the meta-operations from the previous step. Computing the loss over the \emph{entire} structured output forces the model to jointly learn \emph{which} meta-operation to invoke, \emph{when} to invoke it, and \emph{how} to use the standard tool given the current context.

\section{Experiments}
\subsection{Experimental Setup}

\textbf{Evaluations.}
We evaluate our \textbf{LongSeeker} on four key benchmarks spanning targeted information-seeking and general agent capabilities. \textbf{BrowseComp}~\citep{wei2025browsecomp} and \textbf{BrowseComp-ZH}~\citep{zhou2025browsecompzh} evaluate multi-step navigation and hard information retrieval in English and Chinese, respectively (sampling 200 questions from each benchmark due to resource constraints). \textbf{xbench}~\citep{chen2025xbench} assesses complex deep research capabilities including planning, reasoning, and synthesis across profession-aligned real-world tasks.
Finally, \textbf{GAIA}~\citep{mialon2023gaia} (text-only subset) evaluates general agent capabilities requiring combined web browsing, tool use, and multi-step reasoning.
We set the max tool call as 300 for all benchmarks. For BrowseComp and BrowseComp-ZH, we also apply the discard-all technique and allow for 5 rounds at maximum following MiroThinker~\citep{miromindteam2026mirothinker17}.

\textbf{Baselines.}
To validate the effectiveness of \textbf{LongSeeker}, we compare it against several state-of-the-art systems categorized into two groups: (1) \emph{foundation models with tools}, comprising frontier proprietary systems such as GPT-5~\citep{openai2025gpt5}, Gemini-3.0-Pro~\citep{google2025gemini3pro}, Claude-Opus-4.5~\citep{anthropic2025claudeopus45}, and Seed-2.0-Pro~\citep{bytedance2026seed20}, alongside open-weight models DeepSeek-V3.2~\citep{deepseekai2025deepseekv32} and GLM-4.7~\citep{zhipuai2025glm47}; and (2) \emph{search agents}, which serve as direct, comparable-scale benchmarks at 30B parameters, including MiroThinker series~\citep{miromindteam2026mirothinker17}, REDSearcher~\citep{zheng2026redsearcher}, IterResearch~\citep{chen2026iterresearch}, AgentFold~\citep{ye2025agentfold},  Tongyi-DeepResearch~\citep{tongyideepresearchteam2025tongyideepresearchtechnicalreport}, and OpenSeeker~\citep{du2026openseeker}. This diverse baseline set covers representative paradigms in contemporary agentic search. All baseline results are sourced from official publications or publicly available evaluation platforms.

\begin{table}[t]
\centering
\caption{Main results. \textbf{LongSeeker}, trained under the \textbf{Context-ReAct} paradigm, outperforms GPT-5 and Gemini-3.0-Pro on BrowseComp despite having only 30B parameters, highlighting the effectiveness and potential of Context-ReAct. Scores marked with $^{*}$ denote ReAct-based agents without context management on BrowseComp and BrowseComp-ZH, while ``--'' indicates unavailable or unknown results.}
\label{tab:main_results}
\resizebox{\textwidth}{!}{%

\begin{tabular}{@{}lcccccc@{}}
\toprule
\textbf{Model} & \textbf{Param} & \textbf{Training} & \textbf{BrowseComp} & \textbf{BrowseComp-ZH} & \textbf{xbench-2505} & \textbf{GAIA-text} \\
\midrule
\multicolumn{7}{c}{\emph{\textbf{Foundation Model with Tools}}} \\
\midrule
GPT-5 & - & - & 54.9 & 63.0 & 77.8 & 76.4 \\
Gemini-3.0-Pro & - & - & 59.2 & 66.8 & - & 74.8 \\
Claude-Opus-4.5 & - & - & 67.8 & 62.4$^{*}$ & - & 71.5 \\
Seed-2.0-Pro & - & - & 77.3 & 82.4 & - & 78.6 \\
DeepSeek-V3.2 & 671B & - & 67.6 & 65.0$^{*}$ & 78.0 & 75.1 \\
GLM-4.7 & 358B & - & 67.5 & 66.6$^{*}$ & 72.0 & - \\
\midrule
\multicolumn{7}{c}{\emph{\textbf{Search Agent}}} \\
\midrule
MiroThinker-1.7-mini & 30B & CPT + SFT + RL & 67.9 & 72.3 & - & 80.3 \\
MiroThinker-1.5-mini & 30B & CPT + SFT + RL & 56.1 & 66.8 & 73.1 & 72.0 \\
RedSearcher & 30B & CPT + SFT + RL & 57.4 & 58.2 & - & 80.1 \\
IterResearch & 30B & SFT + RL & 37.3 & 45.2 & 71.0 & 72.8 \\
AgentFold & 30B & SFT & 36.2 & 47.3 & - & 67.0 \\
Tongyi-DeepResearch & 30B & CPT + SFT + RL & 43.4$^{*}$ & 46.7$^{*}$ & 75.0 & 70.9 \\
OpenSeeker & 30B & SFT & 29.5$^{*}$ & 48.4$^{*}$ & 74.0 & - \\
\midrule
\textbf{LongSeeker} & 30B & SFT & \textbf{61.5} & \textbf{62.5} & \textbf{78.0} & \textbf{77.7} \\
\bottomrule
\end{tabular}%
}
\end{table}

\subsection{Results and Analysis}

\textbf{Main results.}
Table~\ref{tab:main_results} presents the primary evaluation results on BrowseComp and BrowseComp-ZH. \textbf{LongSeeker} achieves \textbf{61.5} on BrowseComp and \textbf{62.5} on BrowseComp-ZH, establishing strong performance among 30B-scale open-source search agents. Notably, LongSeeker exceeds MiroThinker-1.5-mini (56.1), Tongyi-DeepResearch (43.4), IterResearch (37.3), AgentFold (36.2), and OpenSeeker-v1 (29.5). Extending evaluation to xbench and GAIA, LongSeeker achieves 78.0 on xbench-2505 and 77.7 on GAIA-text. These scores confirm that the benefits of Elastic Context Orchestration generalize beyond purely information-seeking tasks to broader agent capabilities, with LongSeeker establishing competitive performance across diverse benchmark suites.

\textbf{Context growth dynamics.}
To empirically validate the efficacy of our context management paradigm, we trace the trajectory length and corresponding context token count across 200 questions sampled from BrowseComp. As depicted in Figure~\ref{fig:growth_curve}, we plot the surviving trajectories at each turn alongside their average accumulated context tokens. Unlike ReAct-based DeepSeek-V3.2, which suffers from rapid, unbounded context inflation as observations are passively appended step-by-step, \textbf{LongSeeker} maintains a remarkably stable and concise working memory. The token count initially scales with the problem depth but soon reaches a plateau, staying under 15k tokens even at extended horizons of 300 steps. 

This stabilized growth is a direct consequence of our \emph{complete} and \emph{fine-grained} meta-operations: rather than accumulating noise, the model learns to dynamically purge failed branches (\textsc{Rollback}), discard irrelevant retrievals (\textsc{Delete}), extract only essential snippets (\textsc{Snippet}), and abstract verbose history (\textsc{Compress}). Consequently, the context remains highly compact and information-dense. \textbf{LongSeeker} delivers competitive performance on long-horizon benchmarks while utilizing only a fraction of the underlying model's maximum 256k context window. This vast remaining capacity leaves ample headroom for tackling longer and more complex exploratory tasks. This confirms that the model has internalized how to efficiently deploy our atomic meta tools, retaining critical reasoning signals while minimizing distracting noise.

\textbf{Meta-operation usage.}
Figure~\ref{fig:distribution} shows the usage distribution of the five meta-operations of LongSeeker. We observe that LongSeeker effectively leverages the full set of atomic operations—\emph{Skip}, \emph{Compress}, \emph{Rollback}, \emph{Snippet}, and \emph{Delete}—across trajectories, suggesting that it has acquired a robust strategy for invoking and composing meta-operation to handle complex long-horizon tasks. This ability to coherently and purposefully navigate the context-manipulation space contributes to its strong overall performance.

We also observe a mild imbalance, where \emph{Snippet} and \emph{Delete} are used less frequently. This likely stems from the nature of long-horizon search: early in the process, it is difficult to confidently identify irrelevant information, so LongSeeker tends to preserve more context and adopts a conservative pruning strategy.

\begin{figure}[t]
    \centering

    \begin{subfigure}{0.48\linewidth}
        \centering
        \includegraphics[width=1.2\linewidth]{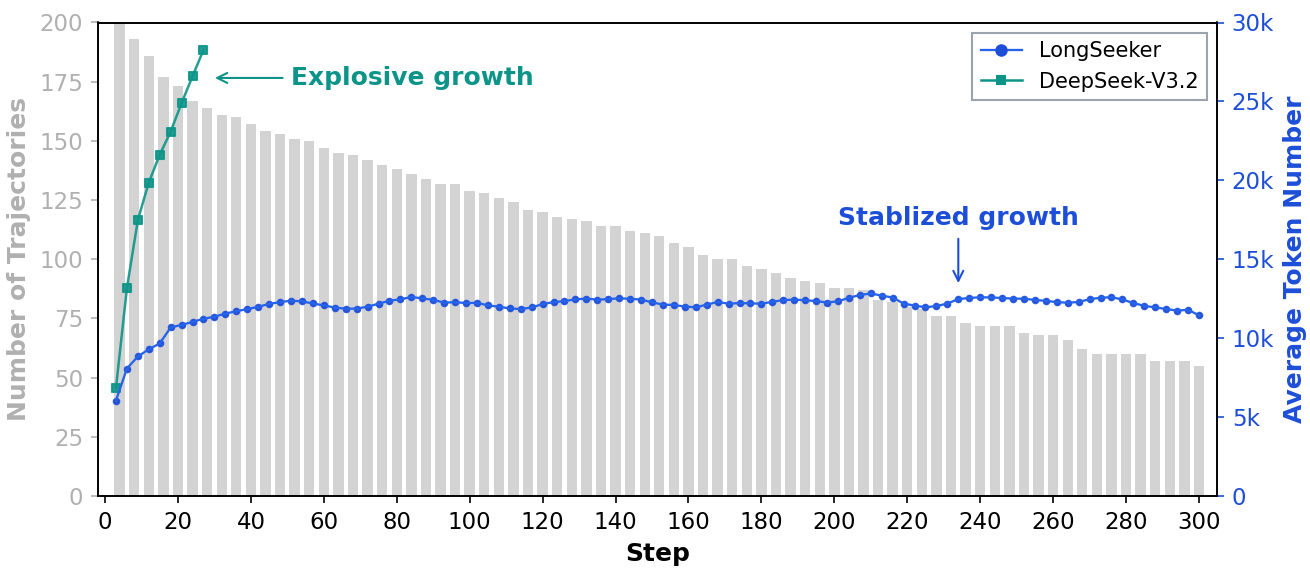}
        \caption{Context Growth Dynamics of LongSeeker}
        \label{fig:growth_curve}
    \end{subfigure}
    \hfill
    \begin{subfigure}{0.48\linewidth}
        \centering
        \includegraphics[width=0.7\linewidth]{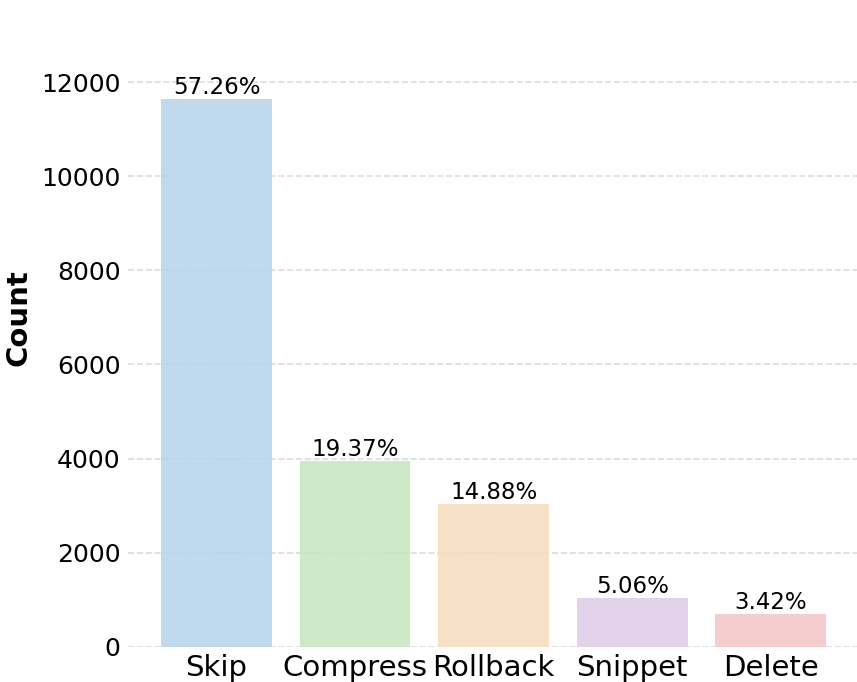}
        \caption{Meta-operation distribution of LongSeeker}
        \label{fig:distribution}
    \end{subfigure}
    \caption{Analysis of LongSeeker’s context management on 200 trajectories sampled from BrowseComp. (a) The average context token count remains stable and well bounded (plateauing around 15k tokens) over long horizons, in contrast to the explosive linear growth of ReAct-based DeepSeek-V3.2. The managed context is highly compact, utilizing a mere fraction of the LongSeeker's 256k capacity. (b) LongSeeker learns through training to utilize all five meta-operations and effectively invoke and compose them to solve long-horizon tasks.}

\end{figure}

\begin{figure}[t]
    \centering
    \includegraphics[width=0.8\linewidth]{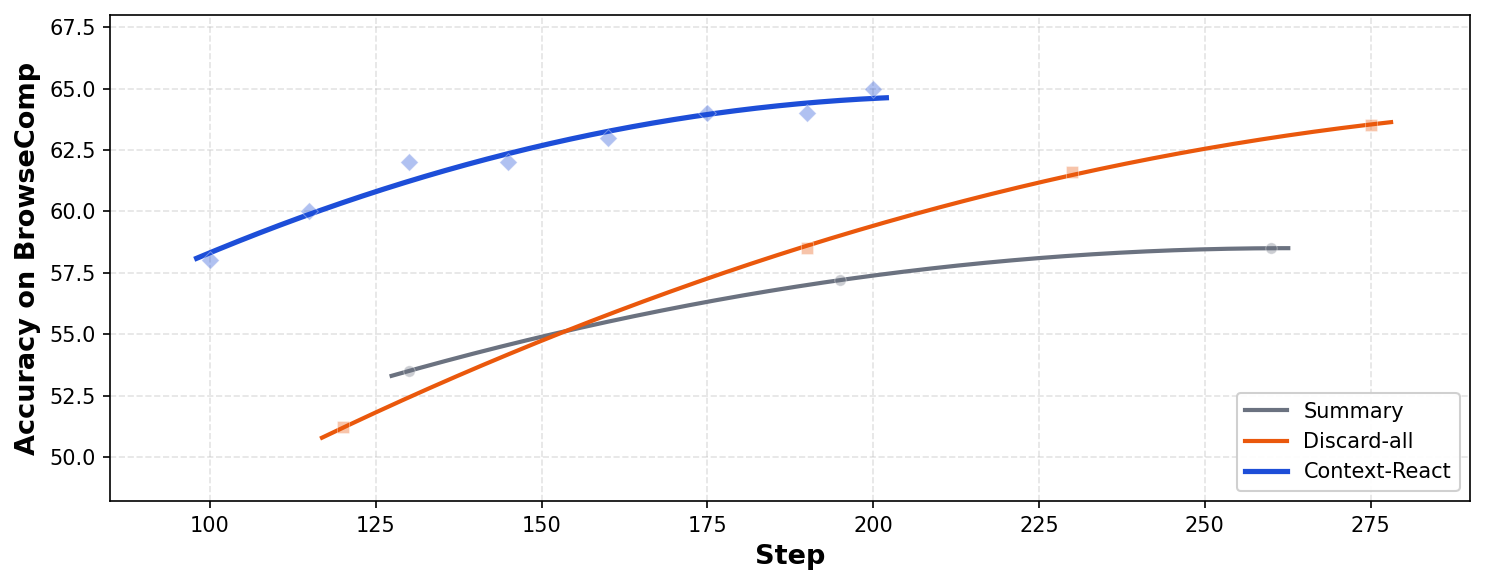}
    \caption{Effectiveness of the Context-ReAct paradigm on BrowseComp compared to other context management strategies. Context-ReAct achieves better performance under the same step budget.}
    \label{fig:ablation}
\end{figure}
\textbf{Comparison of Context Management Strategies.}
To evaluate the effectiveness of the Context-ReAct paradigm, we conduct controlled experiments on BrowseComp under a unified setup, where all methods are built upon the same base model (DeepSeek-V3.2) and share identical configurations. We compare Context-ReAct with two commonly used context management strategies used in DeepSeek-V3.2~\citep{deepseekai2025deepseekv32}: (1) \emph{Summary}, which compresses the overflowed trajectory into a summary and resumes the rollout from the condensed context; and (2) \emph{Discard-all}, which resets the context by removing all previous tool-call history, similar to reinitializing with a fresh context. As shown in Figure~\ref{fig:ablation}, Context-ReAct consistently achieves the highest performance under the same step budget, demonstrating its superior effectiveness in long-horizon tasks through elastic context control that enables adaptive management and utilization of context throughout extended trajectories.

\section{Conclusion}

We propose \textbf{Context-ReAct}, a general agentic paradigm for elastic context orchestration that enables agents to jointly generate reasoning, context meta-operations, and tool calls at each step. Context-ReAct defines five atomic operations---\emph{Skip}, \emph{Compress}, \emph{Rollback}, \emph{Snippet} and \emph{Delete}. Together, these operations provide a \emph{complete} and \emph{fine-grained} mechanism for multi-resolution control over the evolving working context. We train \textbf{LongSeeker-30B} based on this paradigm and demonstrate competitive performance on long-horizon search benchmarks, notably surpassing Tongyi DeepResearch and AgentFold on BrowseComp. Our experiments empirically validate that the application of these atomic tools yields significantly more compact and efficient context management compared to append-only or coarse-grained truncation approaches, enabling sustained high performance at long horizons with strong potential for harder tasks.

\textbf{Future work.}
Our current implementation leverages SFT on synthesized trajectories without rejection sampling or advanced exploration strategies.
One direction involves applying RL to optimize meta-operation usage, enabling agents to explore the action spaces.
Furthermore, we envision Context-ReAct as a foundational architectural paradigm rather than a search-specific solution.
Its core philosophy of state-dependent fidelity is inherently domain-agnostic, offering a scalable blueprint for other long-horizon challenges such as autonomous software engineering, large-scale legal discovery, and multi-modal scientific reasoning, where the ability to fluidly restructure massive working contexts is critical.

\medskip
{
\bibliographystyle{plainnat}
\bibliography{ref}
}

\clearpage
\appendix

\section{Case Study}
\label{app:case_study}

Figure~\ref{fig:case_study} shows the managed context after applying meta-operations, and Figure~\ref{fig:case_study_output} shows the corresponding structured output from \textbf{LongSeeker}.

\begin{figure*}[h]
  \centering
  \includegraphics[height=0.75\textheight]{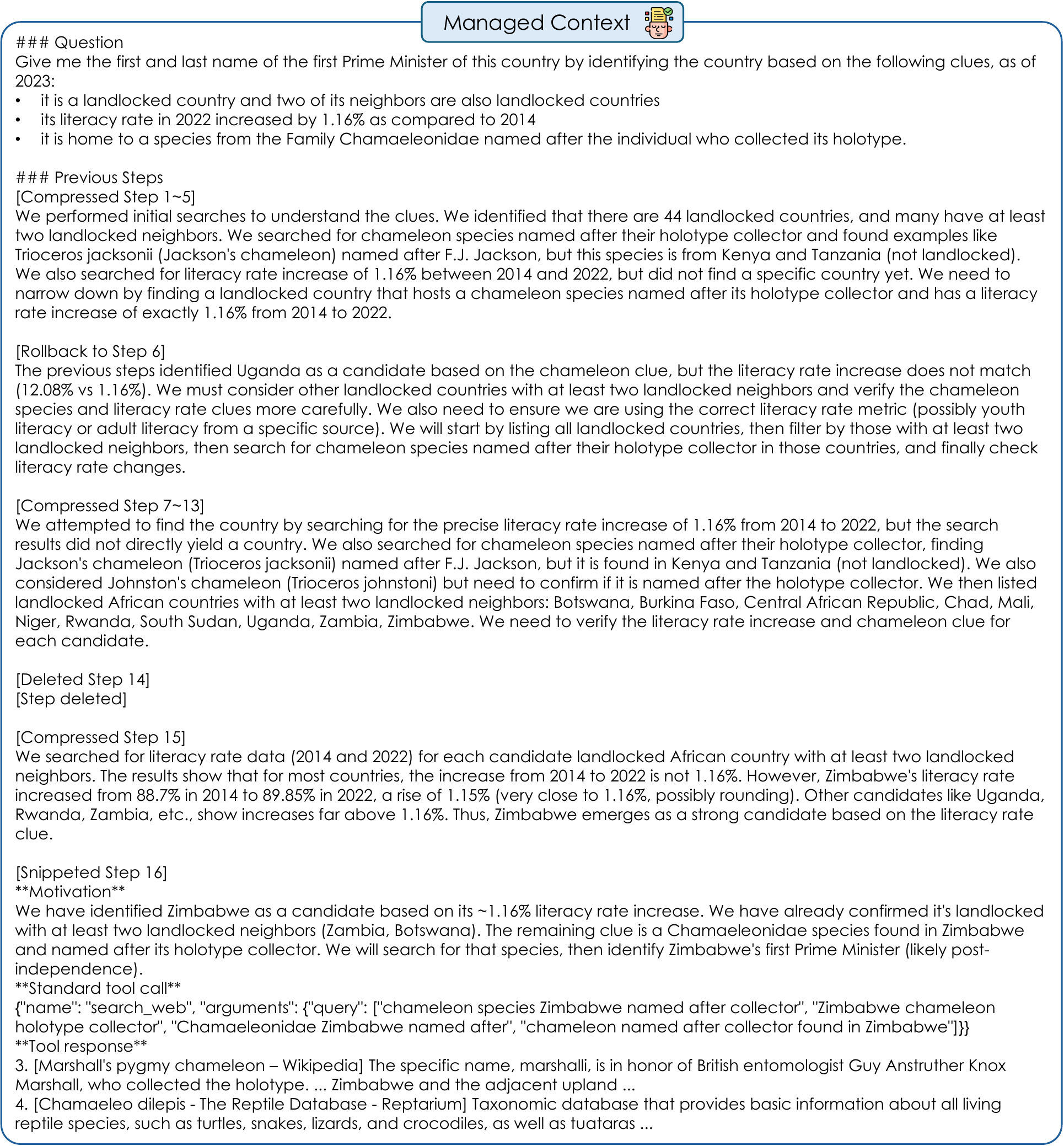}
  \caption{Complete case study showing managed context at a reasoning step. The trajectory demonstrates the combined effect of \textsc{Compress}, \textsc{Rollback}, \textsc{Delete}, and \textsc{Snippet} operations.}
  \label{fig:case_study}
\end{figure*}

\begin{figure*}[h]
  \centering
  \includegraphics[height=0.7\textheight]{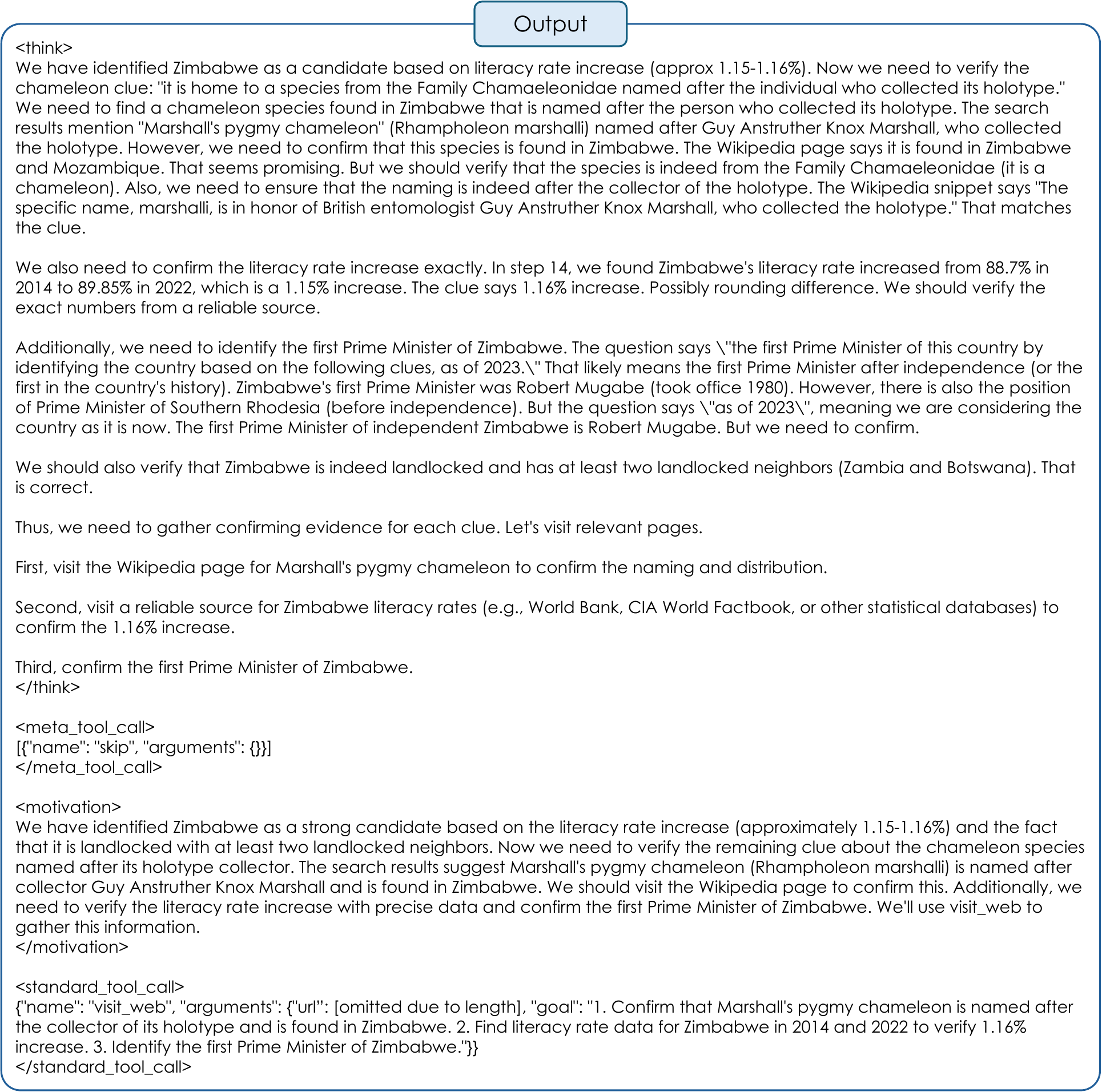}
  \caption{Complete structured output from \textbf{LongSeeker}, including reasoning, meta-tool calls, motivation, and standard tool call.}
  \label{fig:case_study_output}
\end{figure*}

\end{document}